%% file: main.tex
\documentclass{article}
\usepackage{ijcai26}

\usepackage{times}
\usepackage{soul}
\usepackage{url}
\usepackage[hidelinks]{hyperref}
\usepackage[utf8]{inputenc}
\usepackage[small]{caption}
\usepackage{graphicx}
\usepackage{amsmath}
\usepackage{amsthm}
\usepackage{booktabs}
\usepackage{algorithm}

\usepackage[noend]{algpseudocode}
\usepackage{tikz}
\usetikzlibrary{shapes.geometric}
\usepackage{amssymb}
\usepackage{dsfont}
\usepackage[disableredefinitions,basic]{complexity}
\usepackage{nicefrac}

\newtheorem{example}{Example}
\newtheorem{definition}{Definition}

\newtheorem{theorem}{Theorem}

\newcommand{\rational}{{\mathds Q}}

\newcommand{\disj}{\vee}
\newcommand{\conj}{\wedge}
\newcommand{\impl}{\rightarrow}

\newcommand{\setd}{\backslash}

\newcommand{\paths}{\ensuremath{\mathcal{P}}}
\newcommand{\seq}{\ensuremath{\mathit{s}}}
\newcommand{\pathassign}[1]{\ensuremath{\pi_{#1}}}

\newcommand{\instance}{\ensuremath{I=(\phi,X,P,C)}}
\newcommand{\dt}[1]{\ensuremath{G_{#1}}}
\newcommand{\valuation}{\ensuremath{\pi}}

\newcommand{\cost}{\ensuremath{c}}

\newcommand{\curcost}{\ensuremath{parentcost}}
\newcommand{\mincost}{\ensuremath{childcost}}
\newcommand{\LB}{\ensuremath{bound}}
\newcommand{\prob}{\ensuremath{p}}

\newcommand{\selcost}{\ensuremath{C_x}}
\newcommand{\continue}{\textbf{continue}}

\newcommand{\FlagExact}{\ensuremath{\textsc{Exact}}}
\newcommand{\FlagLB}{\ensuremath{\textsc{LB}}}

\newcommand{\sharpP}{\ComplexityFont{\#P}}
\newcommand{\FPSPACE}{\ComplexityFont{FPSPACE}}

\title{Cost-Optimal Decision Diagrams for Stochastic Boolean Function Evaluation}

\author{
Xia Zong$^1$
\and
Tuomo Lehtonen$^1$
\and
Jussi Rintanen$^1$\\
\affiliations
$^1$Department of Computer Science, Aalto University\\
\emails
\{xia.zong@aalto.fi, tuomo.lehtonen@aalto.fi, jussi.rintanen@aalto.fi\}
}
\begin{document}

\maketitle

\begin{abstract}
In many decision-making scenarios, acquiring information incurs different costs. We consider the problem of constructing a deterministic evaluation strategy that minimizes the expected cost of evaluating a propositional formula under variable costs and a probability distribution over truth assignments. We present a branch-and-bound algorithm with variable-selection heuristics, pruning, and caching. To the best of our knowledge, it is the first practical exact algorithm for this level of generality. Experiments on random instances demonstrate scalability and quantify the efficiency--quality trade-off of a greedy beam-search variant. We additionally evaluate a structured heart-disease diagnosis instance. Finally, we prove that the problem is $\sharpP$-hard and contained in $\PSPACE$.
\end{abstract}

\section{Introduction}\label{sec:intro}

Evaluating a Boolean formula becomes non-trivial when observing its variables incurs different costs, especially when the variables are statistically dependent. This scenario is typical in applications such as diagnosis, testing and classification~\cite{Unluyurt04,Unluyurt25,Moret82}. This problem is known as the {\em Stochastic Boolean Function Evaluation} (SBFE) problem, and in the sequential testing literature~\cite{Unluyurt04} {\em sequential testing of Boolean functions}.

Exhaustive brute-force methods have been proposed for some classes of formulas~\cite{Halpern74,BreitbartReiter75}.
More recent works focus on worst-case complexity and approximation algorithms~\cite{DeshpandeHK14}, sometimes in special cases such as symmetric functions~\cite{GkenosisGHK22}, or using costs but no probabilities or assuming uniform probabilities for all variables~\cite{CharikarFGKRS00,BlancLT21}.
Polynomial-time cases with restricted Boolean functions such as $k$-out-of-$n$~\cite{SalloumBreuer84,ChangSF90} or disjunctions (analogously conjunctions) of atoms~\cite{Garey73} have been considered for sequential testing.

The SBFE problem is related to various decision-tree problems~\cite{GareyGraham74,HyafilRivest76,Quinlan86}.
Each node of a decision tree represents a test (possibly with an associated cost), the result of which determines which sub-tree to follow, and leaf nodes contain a value (such as a class or diagnosis) to which the sequence of test results from the root node is mapped.

Probabilities may be associated with the classes or the test outcomes in different ways. The tests and their relation to the diagnoses or the test results may be expressible as a mapping or a formula.

While there is a continuing interest in analytical works on tractable approximations and special cases, there has been little focus on scalable optimal methods for the SBFE problem in the most general setting with general Boolean functions and complex probability distributions on the variables.
This is the gap our work helps to fill by proposing and evaluating a practical algorithm that incorporates heuristics and effective pruning methods to improve scalability.

We address the off-line SBFE problem, with the objective of constructing decision diagrams to evaluate Boolean formulas such that the expected cost of evaluation is minimal.
The diagram represents a general procedure that can be used multiple times with different variable values, as long as the costs and the statistical dependencies between the variable values remain the same.

Concretely, we consider a general problem of optimal decision diagram construction, starting from an arbitrary propositional formula $\phi$.
The value of each variable in $\phi$ can be observed, incurring a cost, and the conditional probability of a variable being true, given some subset of others, is known.
We want to minimize the expected cost of determining the truth-value of $\phi$.
Additionally, in contrast to much of the earlier work that has imposed restrictions on cost functions or probability distributions, we permit arbitrary positive costs and general probability distributions that can, for example, be represented by Bayesian networks.

Our contributions are as follows.
First, we propose a practical branch-and-bound algorithm for constructing optimal decision diagrams, including various variable ordering heuristics and efficient pruning and caching schemes.
We present an implementation of our algorithm and evaluate its runtime performance.
We empirically show the efficiency improvements from pruning, caching and variable ordering heuristics, and show performance variation across different clause-to-variable ratios, specifically a region around which instances are particularly hard.
We also investigate the trade-off between runtime and quality when using a beam search instead of optimal search.
Finally, we present novel complexity results for constructing optimal decision diagrams for arbitrary Boolean formulas, showing the problem to be $\sharpP$-hard.
NP-hardness of various SBFE variants has been stated in the literature~\cite{AllenHKU17,GreinerHJM06,KaplanKM05}, but stricter lower bounds have not been reported for the SBFE problem or the equivalent sequential testing problem~\cite{Unluyurt25}.

We review the related literature in more detail in Section~\ref{sec:related}.
The problem is formally defined in Section~\ref{sec:problem_statement}, and
its complexity is analyzed in Section~\ref{sec:complexity}.
We present our algorithm in Section~\ref{sec:algo} and
empirically evaluate its implementation in Section~\ref{sec:experiments}.

\section{Related Work}
\label{sec:related}

A number of works in the literature consider the evaluation of a Boolean condition or a classifier, in some cases with a decision diagram or tree as a result.

\paragraph{Binary Decision Diagrams} Evaluation of Boolean functions can be described with different types of Binary Decision Diagrams, including Free BDDs (FBDD), which allow different variable orderings on different paths through the diagram. Methods for finding FBDDs of minimal size have been developed~\cite{GuntherDrechsler02}.
This is a restricted setting compared to ours: FBDD variables have no costs or probabilities for different value assignments, and the objective is to minimize size rather than the cost of using variables.

\paragraph{Object identification problems} The objective is to identify the class a given object belongs to, by performing a sequence of tests, each of which corresponds to a subset of objects, with the outcome being whether the object belongs to that subset~\cite{GareyGraham74,HyafilRivest76}. All possible test sequences are represented as a decision tree. Probabilities, if used at all, are assigned to classes only, not to tests.
Hyafil and Rivest~\shortcite{HyafilRivest76} show that checking the existence of a decision tree of a given size is \NP-complete.
Later works are analytical and focus on approximation algorithms with logarithmic approximation ratios~\cite{ChakaravarthyPRS09,LiLM20,ZhuoNagarajan25}.

This problem is simpler than the one in our work due to the restricted probabilities and propositional language.
The case with only two classes is a special case of our problem.
The problem studied in our work can be generalized to multi-class classification, for example via vectors of Boolean formulas.

\paragraph{Binary classification with Boolean formulas} Using decision trees to determine the value of a Boolean formula, similarly to the present work, has been considered in the literature.
Reinwald and Soland~\shortcite{ReinwaldSoland66} construct decision trees with the least expected cost for decision tables, which are essentially Boolean functions in disjunctive normal form, with a probability associated to each disjunct.
Breitbart and Reiter~\shortcite{BreitbartReiter75} present an algorithm for finding a decision tree with the minimum average number of observations to determine the value of a monotonic Boolean function, with unit costs for every observation, and uniform probability for every valuation. The treatment of probabilities here is far less general than in our work, and the algorithm for finding optimal decision trees is brute-force without heuristics or pruning methods.
Many works on circuit complexity consider the depth of circuit diagrams for evaluating Boolean functions~\cite{Wegener84}, without separately considering costs or associating probabilities with different value combinations.
Approximation algorithms for the SBFE problem have been investigated~\cite{DeshpandeHK14,GkenosisGHK22}.

\paragraph{Decision trees in Machine Learning}
Decision trees have been used for representing large datasets more compactly, to make the data more explainable, and to turn it into an effective classification procedure~\cite{Quinlan79,Quinlan86,BertsimasDunn17,MienyeJere24}.
Costs are sometimes considered, for example in algorithms that attempt to minimize the cost of misclassifying instances, sometimes together with the cost of reading the variable values~\cite{LingYWZ04,LomaxVadera13}.
The machine learning problem is fundamentally different: learning (an approximation of) some decision function, in contrast to the SBFE problem which aims to devise a general strategy of most cheaply evaluating the decision according to a given decision function.
This optimality in evaluation is not considered nor solved by learning a classification function.

\section{Problem Statement}
\label{sec:problem_statement}

\newcommand{\formula}{\phi}
\newcommand{\clause}{C}
\newcommand{\vars}{\mbox{Vars}}
\newcommand{\lits}{\mbox{Lits}}
\newcommand{\model}{\tau}

We briefly recall the basics of propositional logic.
Let $X = \{ x_1,\ldots,x_n\}$ be a set of propositional variables.
Formulas are formed from variables in $X$ and connectives $\disj$, $\conj$ and $\neg$.
A valuation $v : X\rightarrow\{0,1\}$ is a total function $X$
to values 0 ({\em false}) and 1 ({\em true}), and it determines the truth
of formulas $\phi$, indicated by $v\models\phi$, as follows.
For atomic formulas $x\in X$, $v\models x$ if and only if $v(x) = 1$. For compound formulas,
$v\models\phi_1\conj\phi_2$ iff $v\models\phi_1$ and $v\models\phi_2$,
$v\models\phi_1\disj\phi_2$ iff $v\models\phi_1$ or $v\models\phi_2$,
and $v\models\neg\phi$ iff $v\models\phi$ does not hold.
A formula $\phi$ is {\em satisfiable} if $v\models\phi$ for at least one valuation $v$.
It is {\em valid} if $v\models\phi$ for all valuations $v$.

Consider a formula $\phi$ with Boolean variables in $X = \{ x_1,\ldots,x_n\}$,
and the problem of determining whether $\phi$ is true or false.
Each variable $x\in X$ can be observed at some positive rational cost $\cost(x)$, $\cost : X\rightarrow\rational$.
Further, the probability model provides $P(v(x)=b \mid B)$ for each variable $x\in X$, value $b\in\{0,1\}$, and partial valuation $B$ over variables in $X\setd\{x\}$ reached during evaluation. These conditionals are induced by a joint distribution over $X$.

\begin{definition}[Problem instance]\label{de:infcollection}
    Our problem instance is \instance, with

a Boolean formula $\phi$ over a set of variables $X$;
a conditional probability distribution $P$ on $X$; and
a cost function $\cost : X\rightarrow\rational$ of observing variables.

\end{definition}

The aim is to construct a minimal cost decision diagram that unambiguously determines the value of $\phi$.
Intuitively, given a Boolean formula $\phi$ over variables $X$, we say that \dt{\phi} is a decision diagram for $\phi$ if it is a decision diagram with non-leaf nodes labeled with variables in $X$ and arcs with a value for the corresponding variable, and the following conditions hold.
Each possible valuation is represented by exactly one path in the diagram.

Formally, we require the following.
\begin{definition}[Decision diagram for a formula]\label{de:dtrees}
Given
    an instance \instance, a {\em decision diagram for $\phi$} is a directed acyclic graph $G = \langle N,A\rangle$ with
$N$ a finite set of nodes and $A\subseteq N\times N$ a set of arcs, where
\begin{enumerate}
\item $O : N\rightarrow X\cup\{ \top, \bot\}$ assigns to each node a variable, the symbol $\top$ (for {\em true}), or $\bot$ (for {\em false}),
with
\begin{itemize}
\item $O(n)\in\{\top,\bot\}$ only if $n$ has no outgoing arcs,
\item $O(n)\in X$ only if $n$ has two outgoing arcs, and
\end{itemize}
\item $V : A\rightarrow\{0,1\}$ indicates whether the arc is followed when the value
of the observed variable is {\em false} or {\em true},
\item for every node $n$ such that $O(n)\in X$, there is exactly one outgoing arc $a$
labeled $V(a)=0$ and exactly one outgoing arc $a'$ labeled $V(a')=1$,
\item
there is a root node $r\in N$ such that no arc ends in $r$,
\item every node in $N\setd\{r\}$ has at least one incoming arc.

\end{enumerate}

    Consider a path starting from the root of $G$, $\seq=(a_1,\dots,a_m)$ where each $a_i\in A$.
    Let $(n_1,\dots,n_m)$ be the sequence of nodes, not including $\top$ or $\bot$, appearing in the arcs of $\seq$.
    We define $\pathassign{\seq}$, the valuation function corresponding to $\seq$, as $\pathassign{\seq}(O(n_i))=V(a_i)$ for all $1\leq i \leq m$.

    $G$ is a decision diagram for $\phi$ (denoted $\dt{\phi}$) if moreover
    1) for all paths $\seq$ of \dt{\phi}, if $\seq$ ends in $\top$, then $\phi$ is valid under the valuation $\pathassign{\seq}$, and if $\seq$ ends in $\bot$, then $\phi$ is unsatisfiable under $\seq$;
    and 2) for each total valuation \valuation\ over $X$, there is a path $\seq$ in \dt{\phi} such that $\pathassign{\seq}$ can be extended to \valuation.
\end{definition}

In words, a decision diagram for a formula $\phi$ classifies any valuation as $\top$ ($\bot$) iff $\phi$ is true (false) under (all extensions to) that valuation.
A decision tree is a special case of a decision diagram, where each node has at most one incoming arc.

We move on to defining the (evaluation) cost and thereby the optimality of decision diagrams for formulas.
Let $\paths(\dt{\phi})$ denote the collection of paths with non-zero length starting from the root in a decision diagram $\dt{\phi}$.
For brevity, for path $\seq$, let $\pathassign{\seq}^j$ denote $\pathassign{\seq}$ restricted to the first $j$ valuations in the path, i.e. if the nodes corresponding to $\seq$ are $(n_1^s,\dots,n_m^s)$, let $N'=\{n_i^s\in N\mid 1\leq i\leq j\}$ and $\pathassign{\seq}^j=\pathassign{\seq}|_{N'}$.
Further, let $\pathassign{\seq}(j)$ denote the $j$th valuation in $\seq$, i.e. $O(n_i)=\pathassign{\seq}(O(n_i))$.

\begin{definition}[Optimal decision diagram for $\phi$]
    \label{de:dt_for_phi}
    Given an instance \instance\ and a decision diagram for $\phi$, $\dt{\phi}$, the expected (evaluation) cost of \dt{\phi} is
\begin{align*}
    \cost(r)+ \sum_{\seq\in\paths(\dt{\phi})} \left( \cost(n^{\seq}_{|\seq|}) \cdot  \prod_{j=1}^{|\seq|} P(\pathassign{\seq}(j)\mid \pathassign{\seq}^{j-1})\right).
\end{align*}
    The product of conditional probabilities is the probability, under the joint distribution, of reaching the corresponding path.
    A decision diagram for $\phi$ is optimal if it has the minimal cost of all decision diagrams for $\phi$.
\end{definition}

In words, for each path with length greater than $0$, the cost of observing the final variable in the path is multiplied by the probability of this particular valuation, and the cost of the whole tree is the cost of observing the root variable plus the sum of all paths.
The cost is the expected cost of determining the truth value of $\phi$ by a sequence of variable observations.

\begin{example}
    Consider the instance $\instance$ with $\phi=(A\disj B\disj C)\conj(\neg A\disj D)$, $\cost(A)=\cost(B)=\cost(D)=1$ and $\cost(C)=10$, and uniform probabilities, i.e. each variable is true with a probability of 0.5.
    Figure~\ref{fi:dt_ex} shows two decision diagrams for $I$ as trees; it is easy to see that they both correctly classify all valuations to $\phi$.

    The graph on the left has a cost of $1 + 1\cdot 0.5 + 1\cdot 0.5 + 10\cdot 0.5^2 = 4.5$.
    The sum terms correspond to the costs of the variables $A$, $D$, $B$, and $C$ respectively, multiplied by the probability of the evaluation reaching their respective node.
    The graph on the right has a cost of $1 + 1\cdot 0.5 + 10 \cdot 0.5 + 1 \cdot 0.5^2 = 6.75$,

    reflecting the fact that the costly $C$ is evaluated before $B$ in the right-hand branch.
\end{example}

\begin{figure}
    \centering
\begin{tikzpicture}[xscale=1.1, yscale=0.8, scale=0.9]

  \node[circle,draw,inner sep=2pt](A) at (1,4) {\small$A$};
    \node[circle,draw,inner sep=2pt](At) at (-0,3) {\small$D$};
    \node[circle,draw,inner sep=2pt](Af) at (2,3) {\small$B$};
    \node[circle,draw,inner sep=2pt](AfBf) at (2.5,2) {\small$C$};

  \node(AtDt) at (-0.5,2) {$\top$};
  \node(AtDf) at (0.5,2) {$\bot$};

  \node(AfBt) at (1.5,2) {$\top$};

  \node(AfBfCt) at (2,1) {$\top$};
  \node(AfBfCf) at (3,1) {$\bot$};

  \tikzstyle{every path}=[arrows=-stealth,->]

  \draw(A) -- (At);
  \draw[dashed] (A) -- (Af);

  \draw(At) -- (AtDt);
    \draw(At)[dashed] -- (AtDf);

  \draw(Af) -- (AfBt);
\draw(Af)[dashed] -- (AfBf);

  \draw(AfBf) -- (AfBfCt);
  \draw[dashed] (AfBf) -- (AfBfCf);

\end{tikzpicture}
    \hspace{10pt}
\begin{tikzpicture}[xscale=1.1, yscale=0.8, scale=0.9]

  \node[circle,draw,inner sep=2pt](A) at (1,4) {\small$A$};
    \node[circle,draw,inner sep=2pt](At) at (-0,3) {\small$D$};
    \node[circle,draw,inner sep=2pt](Af) at (2,3) {\small$C$};
    \node[circle,draw,inner sep=2pt](AfBf) at (2.5,2) {\small$B$};

  \node(AtDt) at (-0.5,2) {$\top$};
  \node(AtDf) at (0.5,2) {$\bot$};

  \node(AfBt) at (1.5,2) {$\top$};

  \node(AfBfCt) at (2,1) {$\top$};
  \node(AfBfCf) at (3,1) {$\bot$};

  \tikzstyle{every path}=[arrows=-stealth,->]

    \draw(A) -- (At);
  \draw[dashed] (A) -- (Af);

  \draw(At) -- (AtDt);
    \draw(At)[dashed] -- (AtDf);

  \draw(Af) -- (AfBt);
\draw(Af)[dashed] -- (AfBf);

  \draw(AfBf) -- (AfBfCt);
  \draw[dashed] (AfBf) -- (AfBfCf);

\end{tikzpicture}
\caption{Two decision trees for $(A\disj B\disj C)\conj(\neg A\disj D)$. Solid arcs evaluate the variable to 1 and dashed to 0.
}
\label{fi:dt_ex}
\end{figure}

\section{Complexity}
\label{sec:complexity}

Interestingly, the complexity of the SBFE problem and related problems has not been analyzed beyond showing NP-hardness.

With dependencies between the variables and the need to minimize the {\em expected cost} of formula evaluation, the complexity increases substantially beyond NP. We address this gap in the literature.

We show that constructing a decision diagram with minimal expected cost is \sharpP-hard, i.e. as hard as many probabilistic inference problems and model-counting problems for propositional logic~\cite{Valiant79,Roth96}.
This shows, for example, that a reduction to Mixed Integer-Linear Programming, an NP-complete problem, is not feasible (under common assumptions such as NP$\subset$\sharpP).

\begin{theorem}\label{th:sharpphardness}
The problem of calculating the expected cost of the least expected cost decision diagram is \sharpP-hard.
\end{theorem}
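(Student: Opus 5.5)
The plan is a polynomial-time reduction from $\#\mathrm{SAT}$: given a propositional formula $\psi$ over variables $Y=\{y_1,\ldots,y_n\}$, we build an instance \instance\ whose optimal expected cost lets us read off the number $\#\psi$ of models of $\psi$ exactly. Let $z\notin Y$ be a fresh variable and $X=Y\cup\{z\}$. Take $\phi=\psi\conj z$. Let $P$ be the uniform product distribution, with every variable independently true with probability $1/2$; this can be written, for instance, as a Bayesian network with no edges, so every conditional needed in Definition~\ref{de:dt_for_phi} is trivially computable. Set $\cost(y_i)=\epsilon$ for all $i$ and $\cost(z)=M$, with $M=1$ and $\epsilon=1/(n\,2^{n+1})$. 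Both are rationals of polynomial bit length.

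\textbf{Upper bound.} Consider the decision tree that first observes all of $Y$ along every branch. If the resulting valuation falsifies $\psi$, it outputs $\bot$. Otherwise it observes $z$ and outputs $z$'s value. Every $Y$-valuation has probability $2^{-n}$, and $z$ is observed exactly on the $\#\psi$ satisfying ones. Its expected cost is therefore at most $n\epsilon + M\cdot \#\psi/2^n$.

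\textbf{Lower bound.} Take any decision diagram $\dt{\phi}$ and any total valuation $\valuation$ with $\valuation\models\psi$, and follow the unique path $\seq$ consistent with $\valuation$. Suppose $z$ is not among the variables assigned by $\pathassign{\seq}$. Then $\pathassign{\seq}$ has an extension with $z=1$ that makes $\phi$ true, and one with $z=0$ that makes $\phi$ false. So by condition~1) of Definition~\ref{de:dtrees}, $\seq$ can end neither in $\top$ nor in $\bot$, a contradiction. Hence $z$ is observed on every path consistent with a valuation satisfying $\psi$. These valuations have total probability $\#\psi/2^n$, and the cost formula of Definition~\ref{de:dt_for_phi} charges $M$ times the probability of reaching each $z$-node. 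So the optimal cost $c^*$ satisfies $c^*\ge M\cdot\#\psi/2^n$.

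\textbf{Reading off the count.} Combining the two bounds gives
$M\#\psi/2^n\le c^*\le M\#\psi/2^n+n\epsilon$, and $n\epsilon=2^{-(n+1)}<M/2^n$. Therefore $\#\psi=\lfloor c^*\,2^n/M\rfloor$, which is a single oracle call followed by polynomial-time arithmetic. Since $\#\mathrm{SAT}$ is $\sharpP$-complete~\cite{Valiant79}, calculating the optimal expected cost is $\sharpP$-hard.

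The step I expect to need the most care is the lower bound. It must be phrased against the paper's general notion of diagram, which allows DAGs and arbitrary variable orders, and against the path-based cost formula. The key point is that the probability of reaching the $z$-nodes, summed over paths, is at least the probability mass of the $\psi$-satisfying valuations. Each total valuation follows exactly one root-to-leaf path, and by the argument above that path passes through a $z$-node whenever $\valuation\models\psi$. I would make this explicit by rewriting the expected cost as $\sum_{\valuation}P(\valuation)\cdot(\text{cost of the variables observed along }\valuation\text{'s path})$.
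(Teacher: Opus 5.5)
Your proposal is correct and uses the paper's reduction. The instance is the same: $\psi\conj z$ under the uniform distribution. Your costs $\cost(z)=1$ and $\cost(y_i)=1/(n2^{n+1})$ are exactly the paper's $2^n$ and $1/(2n)$ rescaled by $2^{-n}$, so you get the same sandwich $\#\psi\le 2^n c^*\le \#\psi+\nicefrac{1}{2}$. Your split into an explicit witness diagram for the upper bound and a lower bound valid for every diagram (each path of a $\psi$-satisfying valuation must contain a $z$-node) justifies that sandwich more carefully than the paper's appeal to the structure of the optimal diagram, but the idea is the same.
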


\begin{proof}
We reduce $\#$SAT to calculating the cost of the optimal decision diagram.
Let $\phi$ be any propositional formula, and let $X$ be the set of atomic
propositions in $\phi$, with $n = |X|$.
We construct a formula $\phi'$ and a distribution $D$ so that
$\phi$ has $N$ models if and only if the expected cost $c$ of the optimal
decision diagram for $\phi'$ with data $D$ satisfies $N\leq c < N+1$.

Let $\phi' = \phi\conj z$, where $z\not\in X$.
Let $\cost(z) = 2^n$ and $\cost(x) = \nicefrac{1}{2n}$ for all $x\in X$.
Let $D$ be the uniform distribution, i.e., every valuation of $X$
has probability $2^{-n}$.
Clearly, these distributions can be represented polynomially wrt $|X|$ in many formalisms, including Bayesian networks.

In every root-to-leaf path of the decision diagram for $\phi\conj z$ and $D$, some member of $X$
is observed, and only after $\phi$ has been determined to be {\em true},
the value of $z$ is observed.
Paths that correspond to valuations that make $\phi$ false do not observe $z$,
and do not correspond to satisfying valuations of $\phi$.

Consider a node in the decision diagram corresponding to a partial valuation that assigns truth values
to variables $x_1,\ldots,x_k$, and which observes $z$.
This node corresponds to $2^{n-k}$ satisfying valuations of $\phi$.
The observation of $z$ contributes expected value $2^{-k}\cdot 2^n = 2^{n-k}$
to the expected cost of the optimal decision diagram.
Thus the sum of the contributions of the observations of $z$ in all nodes
of the optimal decision diagram equals the number of satisfying valuations $N$ of $\phi$.
Additionally, the graph contains at most $2^n$ observations for variables in $X$,
with a total maximum cost of $n\cdot \nicefrac{1}{2n} = 0.5$ on any execution of the decision diagram.
    Hence, the expected cost $c$ of the optimal decision diagram satisfies $N\leq c \leq N+0.5$.
\end{proof}

Note that model counting is \sharpP-hard even with monotone formulas~\cite{Roth96},
and as our proof preserves monotony,
constructing optimal decision diagrams for monotone formulas is also \sharpP-hard.
While a \sharpP-membership does not seem likely, we can show that computing the expected cost of the decision diagram is in \FPSPACE, and determining whether the expected cost is within given bounds is in \PSPACE.
\PSPACE\ is very close to \sharpP, as $\PH\subseteq \cP^{\sharpP}\subseteq \PSPACE$~\cite{Toda91}, so only a small gap remains in the complexity results.

\begin{theorem}
    The problem of calculating the expected cost of the optimal decision diagram is in \FPSPACE, and the problem of deciding if there is a decision diagram with an expected cost bounded by some constant is in \PSPACE.
\end{theorem}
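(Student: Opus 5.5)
We would prove the statement by a depth-first recursive evaluation of the optimal cost over partial valuations, in the style of expectimin or AND/OR game-tree evaluation, and argue that it uses only polynomial space. For a partial valuation $B$ over $X$ that is consistent with the observations made so far, define $V(B)=0$ if $\phi$ is valid or unsatisfiable under $B$. Otherwise define
\[
V(B)=\min_{x\in X\setd \mathrm{dom}(B)}\Big(\cost(x)+\sum_{b\in\{0,1\}} P(v(x)=b\mid B)\cdot V(B\cup\{x\mapsto b\})\Big).
\]
The optimal expected cost is $V(\emptyset)$. We first check this against Definition~\ref{de:dt_for_phi}. Any decision diagram can be unfolded into a decision tree with the same expected cost, because the cost is a sum over root paths. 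The optimal tree then satisfies the Bellman recursion above. The reason is that the subtree below a node reached by $B$ only affects the conditional expected cost given $B$, so it can be replaced by an optimal one independently. Conversely, the recursion yields a tree achieving $V(\emptyset)$. Branches with zero probability can be given cost zero or treated arbitrarily, since they do not contribute.

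\textbf{Computing $V(\emptyset)$.} We evaluate it by recursion of depth at most $n=|X|$, since each level fixes one new variable. Each stack frame stores the current partial valuation, the loop index over $x$, and a running minimum and sum. The test of whether $\phi$ is valid or unsatisfiable under $B$ is a co\NP/\NP\ question, so it can be answered in polynomial space by enumerating extensions. We assume, as for the Bayesian networks mentioned in the paper, that the conditionals $P(v(x)=b\mid B)$ are computable in \FPSPACE. For a Bayesian network this holds by summing the joint over all completions in polynomial space. Under that assumption, the whole computation uses polynomial space, apart from the size of the numbers involved.

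\textbf{Main obstacle: size of the rationals.} Exact rational arithmetic could blow up, because values are accumulated through $n$ levels of products and sums. We handle this by bounding the bit-length. Each probability is a rational whose denominator divides a fixed quantity $Q$ of polynomial bit-length; for a Bayesian network, $Q$ is the product of all CPT denominators, taken over the exponentially many terms in sums of products of polynomially many factors. We would then either bring everything to a common denominator or argue directly that after $n$ levels the bit-length of $V(B)$ grows only polynomially. Concretely, we write $V(B)\cdot P(B)$ as a sum over leaves of $\cost\cdot P(\text{path})$, where each path probability is a joint probability with denominator dividing a fixed $Q^{\mathrm{poly}}$. Minimization preserves this form, so $V(\emptyset)$ has a numerator and denominator of polynomially many bits, and all intermediate values can be stored in this normalized form.

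\textbf{Decision version.} To decide whether some decision diagram has expected cost at most a constant $k$, we compute $V(\emptyset)$ in \FPSPACE\ and compare it with $k$. This places the decision problem in \PSPACE.
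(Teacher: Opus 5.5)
Your proof is correct and has the same skeleton as the paper's: a depth-first evaluation over partial valuations of depth at most $|X|$, with the validity/unsatisfiability tests and the Bayesian-network inference done in polynomial space. The routes differ in where the recursion comes from. The paper runs Algorithm~\ref{alg:main} without caching and cites its correctness theorem, which keeps the proof to a few lines and ties the bound to the implemented search. You instead derive the Bellman equation directly from Definition~\ref{de:dt_for_phi} and minimize over \emph{all} unobserved variables, so your argument is self-contained.

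This difference matters. Algorithm~\ref{alg:main} only branches on variables that still occur in the simplified formula, and under dependencies a variable that has been simplified away can still be worth observing. Take $\phi=z\conj(a\disj b)\conj(y\disj z)$ with $y,z$ independent and uniform, $a=y$, $b=\neg y$, and costs $\cost(z)=0.01$, $\cost(y)=0.3$, $\cost(a)=\cost(b)=1$. Observing $z$, then $y$, then whichever of $a,b$ is predicted true costs $0.01+0.5(0.3+1)=0.66$. Every diagram that only observes variables occurring in the current simplified formula costs at least $0.01+0.5\cdot 1.5=0.76$. Your formulation does not inherit this restriction.

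You also handle the bit-length of the numbers, which the paper's proof ignores but an \FPSPACE\ bound requires. Your concrete version, working with $V(B)\cdot P(B)$ (the unnormalized quantity the algorithm carries through $\prob$), is the right one. Three small corrections: only the joint marginals $P(B)$, not the conditionals $P(v(x)=b\mid B)$, have denominators dividing the fixed $Q$; the common denominator must also absorb the denominators of the costs; and the numerators stay polynomial because $V(B)\cdot P(B)\le\sum_{x\in X}\cost(x)$.
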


\begin{proof}
Consider our Algorithm~\ref{alg:main} in Section~\ref{sec:algo}.
The procedure call \textsc{OptCost}($\phi,\emptyset,0,\infty,1.0)$ returns the expected cost of
the decision diagram with the lowest expected cost.
At each recursive call to \textsc{OptCost}, the number of occurrences of
variables in $X$ in the formula $\phi$ is decreased by at least one.
Hence, the recursion depth of \textsc{OptCost} is linear in $|X|$.

For the representations of the statistical data we consider, the computation of
    expected cost on lines~\ref{li:cost2} and \ref{li:cost3} takes polynomial space, as the probabilistic inference problem for Bayesian networks is solvable in \PSPACE.
Hence, the memory consumption at any point of the search graph is polynomial,
ignoring caching (which is not needed for the correctness or completeness
of the algorithm.)
This shows the \FPSPACE\ membership for the function problem as well as the \PSPACE\ membership for the decision problem.
\end{proof}

\section{Algorithm for Optimal Decision Diagrams}
\label{sec:algo}

We present Algorithm~\ref{alg:main} for finding a decision diagram of minimum expected cost for an instance \instance.
We employ a bounding method (lines~\ref{li:prune1}, \ref{li:prune2}, and \ref{li:prune3}) and caching of exact subproblem solutions (lines~\ref{li:cache_check} and \ref{li:cache_add}).
Both result in substantial pruning of the search tree. The advanced lower-bound cache is given in Appendix~\ref{sec:advanced_cache}.
The pseudocode computes only the optimal cost; simple book-keeping can be added to construct the optimal decision diagram.

The algorithm proceeds by iteratively choosing a variable $x$ to observe, and
then constructing sub-graphs for the simpler problems when the value of $x$
has been respectively fixed to $0$ and $1$ (or {\em false} and {\em true}).
The recursion ends when the formula is logically equivalent to either
the constant $\top$ or $\bot$, producing a leaf node in
the decision diagram.
This can be checked by solving the propositional satisfiability problem.

A branch in the search graph can be pruned based on the cost of an optimal decision diagram for a subformula (so that this decision diagram was found earlier in the search in an alternative branch for the same subformula).
We maintain a lower bound $\LB$ (initially $\infty$) and the cost of the thus-far constructed parts of the current decision diagram, in $\curcost$.
We prune those subgraphs of the current search node for which the sum of \curcost\ and the subdiagram cost exceeds $\LB$.
The value of $\LB$ is the lowest expected cost partial decision diagram that can be compared to the current one under construction.
Crucially, \LB\ is only passed to further recursive calls and not returned, ensuring that we are essentially comparing the cost of a complete decision diagram for the subformula at the point in which \LB\ is updated to the cost of an alternative decision diagram, still under construction, for the same subformula.
Pruning is done only when the latter is higher than $\LB$.

The subprocedure {\em assign}($l,\phi$), where $l$ is a literal ($x$ or $\neg x$ for some $x\in X$), replaces occurrences of $x$ by the constant $\top$ (if $l=x$) or the constant $\bot$ (if $l=\neg x$) and unit propagation-like simplifications are employed.\footnote{Our implementation represents the formula in conjunctive normal form (CNF), and {\em assign} is implemented by adding $l$ to the clause set as a unit clause, exhaustively applying the unit resolution and unit subsumption rules, to eliminate any clause that is a strict superset of another clause, and finally removing the unit clause $l$. This simplifies the clause set so that occurrences of $l$ are correctly removed and any resulting unit clauses are used to simplify the formula further.}
We next provide an illustrative example.

\begin{algorithm*}[t]
    \caption{Optimal Expected-Cost Decision Diagram Search}
    \label{alg:main}
    \begin{algorithmic}[1]
    \Function{OptCost}{$\phi, B, \curcost, \LB, \prob$}
        \Comment{$\phi$: CNF, $B$: valuation, $\prob$: path probability}
        
        \If{$B \in \text{Cache}$}
            \Return $\prob\cdot \text{Cache}[B]$ \Comment{Retrieve cached subsolution} \label{li:cache_check}
        \EndIf
        
        \If{$\phi$ is either valid or unsatisfiable} \Return $0$ \Comment{Terminal node of the decision diagram} \EndIf
        
        \State $\mincost \gets \infty$
        
        \For{each variable $x$ occurring in $\phi$, ordered by heuristic} \label{li:variteration}
            \State $\selcost \gets \prob \cdot \cost(x)$ \Comment{Immediate observation cost} \label{li:cost1}
            
            \If{$\curcost + \selcost \geq \LB$} \continue \Comment{Prune if observing $x$ is too costly}\EndIf \label{li:prune1} 
            
                \State $\selcost \gets \selcost + $ \Call{OptCost}{$assign(x{=}\top,\phi), B\cup\{x{=}\top\}, \curcost+\selcost, \LB, \prob\cdot P(x\mid B)$} \label{li:cost2}
                \If{$\curcost + \selcost \geq \LB$} \textbf{continue} \Comment{Prune if observing $x$ and the first truth value is too costly}\EndIf \label{li:prune2} 

                \State $\selcost \gets \selcost+ $ \Call{OptCost}{$assign(x{=}\bot,\phi), B\cup\{x{=}\bot\}, \curcost+\selcost, \LB, \prob\cdot P(\neg x\mid B)$} \label{li:cost3}
                \If{$\curcost + \selcost \geq \LB$} \textbf{continue} \Comment{Prune if observing $x$ and both truth values is too costly}\EndIf \label{li:prune3} 
            
            \State $\mincost \gets \min(\mincost,\selcost)$ \label{li:update_mincost}
            \State $\LB\gets \min(\LB, \curcost+\mincost)$ \label{li:update_bound}
        \EndFor
        
        \If{$\mincost < \infty$}
            \State $\text{Cache}[B] \gets \mincost/\prob$ \Comment{Store (normalized) subsolution} \label{li:cache_add}
        \EndIf
        \State \Return $\mincost$
    \EndFunction
    \end{algorithmic}
\end{algorithm*}

\begin{example}
  Consider the formula $F=(A\disj B\disj C)\conj(\neg A\disj D)$, with a uniform distribution over valuations, and with cost 10 for observing $C$ and cost 1 for other variables.
  Part of the search tree for finding a decision diagram for $F$ is shown in Figure~\ref{fi:dtreesearchexample},
  including finding the first two decision diagrams.
  The search tree has variable nodes (with a diamond shape) with two subtrees corresponding to setting the variable true and false (with the {\em false} subtree indicated by the dashed line), and formula nodes that indicate the formula simplified with the variable values on the path from the root to the node.

  From the search tree, each decision diagram can be obtained by starting from the root node, and for each formula node, eliminating the formula node and all of its subtrees except one, at the same time connecting the remaining subtree with the parent of the formula node.
    For example, the first decision diagram that would be found on a left-to-right traversal is the tree shown on the left in Figure~\ref{fi:dt_ex}, with expected cost 4.5.

In our tree-search algorithm, at the formula node $B\disj C$ after traversing the subtree corresponding to $B$ and before proceeding to the subtree corresponding to $C$, we have the cost $4.5$ as the bound to which we would be comparing any decision tree obtained by searching the subtree for $C$. Now, when going down to the subtree for $C$, the cost so far would be $1 + 1\cdot 0.5 + 10 \cdot 0.5 = 6.5$, which exceeds the bound, indicating that any decision tree found by completing the search for the subtree $C$ would have a cost of at least $6.5$. Hence the search-tree can be pruned at that point, without having to go any deeper.
Thus, 4.5 is the cost of the best decision tree found by observing the variable $A$ first.
The search proceeds from the root node by sequentially considering $B,C$ and $D$ as the first observation.
\end{example}

\begin{figure}[t]
\centering
\begin{tikzpicture}[xscale=0.9, yscale=0.8, scale=0.9]

  \node(R) at (5.5,5.5) {$(A\disj B\disj C)\conj(\neg A\disj D)$};

  \node[diamond,draw,inner sep=1pt] (A) at (2.5,4) {\small$A$};
  \node[diamond,draw,inner sep=1pt] (B) at (5,4) {\small$B$};
  \node[diamond,draw,inner sep=1pt] (C) at (6.5,4) {\small$C$};
  \node[diamond,draw,inner sep=1pt] (D) at (8,4) {\small$D$};

  \node(At) at (1,3) {$D$};
  \node(Af) at (4.4,3) {$B\disj C$};

  \node[diamond,draw,inner sep=1pt] (AtD) at (1,2) {\small$D$};
  \node[diamond,draw,inner sep=1pt] (AfB) at (3.2,2) {\small$B$};
  \node[diamond,draw,inner sep=1pt] (AfC) at (5.6,2) {\small$C$};

  \node(AtDt) at (0.2,1) {$\top$};
  \node(AtDf) at (1.8,1) {$\bot$};

  \node(AfBt) at (2.4,1) {$\top$};
  \node(AfBf) at (4.0,1) {$C$};
  \node(AfCt) at (4.8,1) {$\top$};
  \node(AfCf) at (6.4,1) {$B$};

  \node[diamond,draw,inner sep=1pt] (AfBfC) at (4.0,0) {\small$C$};
  \node[diamond,draw,inner sep=1pt] (AfCfB) at (6.4,0) {\small$B$};

  \node(AfBfCt) at (3.2,-1) {$\top$};
  \node(AfBfCf) at (4.8,-1) {$\bot$};
  \node(AfCfBt) at (5.6,-1) {$\top$};
  \node(AfCfBf) at (7.2,-1) {$\bot$};

  \tikzstyle{every path}=[arrows=-stealth,->]

  \draw(R) -- (A);
  \draw(R) -- (B);
  \draw(R) -- (C);
  \draw(R) -- (D);

  \draw(A) -- (At);
  \draw[dashed] (A) -- (Af);

  \draw(At) -- (AtD);

  \draw(AtD) -- (AtDt);
  \draw[dashed] (AtD) -- (AtDf);

  \draw(Af) -- (AfB);
  \draw(Af) -- (AfC);

  \draw(AfB) -- (AfBt);
  \draw[dashed] (AfB) -- (AfBf);

  \draw(AfC) -- (AfCt);
  \draw[dashed] (AfC) -- (AfCf);

  \draw(AfBf) -- (AfBfC);
  \draw(AfCf) -- (AfCfB);

  \draw(AfBfC) -- (AfBfCt);
  \draw[dashed] (AfBfC) -- (AfBfCf);

  \draw(AfCfB) -- (AfCfBt);
  \draw[dashed] (AfCfB) -- (AfCfBf);

\end{tikzpicture}
\caption{Part of the search tree for $(A\disj B\disj C)\conj(\neg A\disj D)$}\label{fi:dtreesearchexample}
\end{figure}

Note that line~\ref{li:prune3} might seem superfluous, as it does not actually prune the search tree.
This is not the case.
Its purpose is to prevent updating {\em childcost} and caching the value $C_x$ for the current sub-graph when the lowest-cost evaluation of assign($x=\bot$,$\phi$) was never found due to pruning inside the preceding recursive call to {\sc OptCost} (counterexample can be found in the Appendix).

Algorithm~\ref{alg:main} caches the normalized optimal cost of each solved subproblem (line~\ref{li:cache_add}), so that the cost can be rescaled by the path probability when the same valuation state is reached again (line~\ref{li:cache_check}).
Appendix~\ref{sec:advanced_cache} gives the complete version with lower-bound caching for pruned searches.

The selection order of variables also impacts effectiveness.
We would like to find the most promising, low-cost decision diagrams first, so that other branches can be pruned quickly.
Similarly, we would like to first traverse the more promising branch, for {\em true} or {\em false}, instead of trying them in fixed order.
We detail a number of ordering heuristics below.

Pruning could be further strengthened.
Our algorithm only considers the {\em parentcost}, the cost of the current variable $\cost(x)$, and the cost of the first sub-tree with $x$ assigned {\em true} or \emph{false} (lines~\ref{li:prune1}, \ref{li:prune2} and \ref{li:prune3}).
Lower bounds on the cost of the untraversed subtrees (corresponding to the formulas $\mathit{assign}(x,\phi)$ and $\mathit{assign}(\neg x,\phi)$) could be used to strengthen the pruning.
Doing this efficiently is not trivial, however.
One could, e.g., append to \LB\ a cost corresponding to the cheapest variable occurring in one of the truth value branches.
However, this could overestimate the lower bound, since $\mathit{assign}(x,\phi)$ could already be unsatisfiable or valid; a full satisfiability check on $\mathit{assign}(x,\phi)$ might be required for a correct bound.
We leave the investigation of stronger bounds to future work.

\subsection{Heuristics for Search Order}\label{se:heuristics}

\newcommand{\score}{\textsc{Score}}
\newcommand{\scoreb}{\textsc{BranchScore}}
\newcommand{\selectioncost}{\textsc{VScore}}
\newcommand{\multipliercost}{\ensuremath{m_c}}
\newcommand{\multipliervar}{\ensuremath{m_v}}
\newcommand{\multiplier}{\ensuremath{m}}

We propose heuristics for choosing the order of evaluating variables (see line~\ref{li:variteration} of Algorithm~\ref{alg:main}).

It is preferable to first find lower cost subsolutions so that pruning is done earlier, reducing the amount of searching.
To select the next variable for branching, the algorithm considers variables that appear in the current formula and scores them using a heuristic function $\score$.
The evaluation is based on the cost of observing a given variable as well as properties of the subformulas resulting from assigning that variable either true or false.

We assume input formulas to be in CNF.
Recall that $\cost(x)$ gives the cost of observing variable $x\in X$, while $P(x)$ represents the prior probability that $x$ evaluates to true.
For each candidate variable $x$, the score is computed as:
\begin{align*}
    \score(x) & =\ \selectioncost(x) \\
    & + P(x \mid B) \cdot \scoreb(\phi_x, h) \\
& + P(\lnot x \mid B) \cdot \scoreb(\phi_{\lnot x}, h)
\end{align*}
Here, $\phi_x$ and $\phi_{\lnot x}$ denote the CNF formulas after simplifying $\phi$ under valuations $v(x) = \top$ and $v(x) = \bot$ respectively, and $B$ is the current partial valuation.
The variable with the lowest score is selected.
The score uses both the direct cost of observing a given variable (via \selectioncost) and the estimated cost of the subsequent subproblems (via \scoreb).

We use $\cost(x)$ for \selectioncost, and propose three heuristics to compute $\scoreb(\phi, h)$.
The definitions of each heuristic are given in Table~\ref{tab:heuristics}.

Intuitively, \texttt{h\_cost} selects the lowest cost variable regardless of the rest of the formula, \texttt{h\_varCount} eliminates as many variables as possible from the formula, and \texttt{h\_totalCost} balances the immediate, certain cost of observing $x$ with a (rough) estimate of the cost of observing the rest of the variables.

For \texttt{h\_totalCost}, $\selectioncost$ is multiplied by a user-provided constant \multiplier; 0.5 in our experiments.
This is to balance between $\selectioncost$ and $\scoreb$.
The multiplier should be between 0 and 1, with a lower value weighting the certain observation cost $\selectioncost$ over future, uncertain costs.

We also consider \texttt{base}, a baseline heuristic with arbitrary variable ordering.

We use $\score$ to also select the truth branch to evaluate first.
Specifically, if $P(x \mid B) \cdot \scoreb(\phi_x, h) < P(\lnot x \mid B) \cdot \scoreb(\phi_{\lnot x}, h)$, then the lines~\ref{li:cost2}--\ref{li:prune2} are evaluated before lines~\ref{li:cost3}--\ref{li:prune3}.

\begin{table}[t]
\centering

\begin{tabular}{lrr}
\toprule
    \textbf{Name} & \textbf{\selectioncost} & \textbf{\scoreb} \\
\midrule
    \texttt{base}       & $0$ & $0$ \\
    \texttt{h\_cost}       & $\cost(x)$ & $0$ \\
    \texttt{h\_varCount}   & $\cost(x)$ & $N$ \\
    \texttt{h\_totalCost}  & $\cost(x)$ & $\multiplier\cdot \sum_{v \in \phi} \cost(v)$ \\

\bottomrule
\end{tabular}

\caption{
    Heuristic scoring strategies used in $\score(\phi)$.
}
\label{tab:heuristics}
\end{table}

\subsection{Correctness of the Algorithm}
\label{sec:correctness}

We show that Algorithm~\ref{alg:main} identifies the optimal cost.

\begin{theorem}[Correctness]
    \label{th:correctness_sketch}
    The least-cost decision diagram for $\phi$ has expected cost that equals the value returned by \textsc{OptCost}$(\phi,\emptyset,0,\infty,1.0)$ (Algorithm~\ref{alg:main}).
\end{theorem}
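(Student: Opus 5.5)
We argue in two layers. First we fix the quantity the algorithm is meant to compute, the optimal \emph{unscaled} cost of a residual subproblem. Then we prove by induction that the pruned, cached search returns exactly that quantity times the path probability whenever it does not report $\infty$.

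For a partial valuation $B$, write $\phi_B$ for the simplified formula and $\mathrm{OPT}(B)$ for the least expected cost, under $P(\cdot\mid B)$, of a decision diagram for $\phi_B$. We first show the Bellman recurrence: $\mathrm{OPT}(B)=0$ if $\phi_B$ is valid or unsatisfiable, and otherwise
\[
\mathrm{OPT}(B)=\min_{x}\Big(\cost(x)+P(x\mid B)\,\mathrm{OPT}(B\cup\{x{=}\top\})+P(\neg x\mid B)\,\mathrm{OPT}(B\cup\{x{=}\bot\})\Big),
\]
where $x$ ranges over the variables occurring in $\phi_B$. This follows from Definition~\ref{de:dt_for_phi}. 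The cost of a diagram with root $x$ splits into $\cost(x)$ plus the two subdiagrams, weighted by the conditional probabilities, since the path products factor. Any diagram can be unfolded into a tree of equal cost. Observing a variable not occurring in $\phi_B$ only adds positive cost, so such variables never help. By the first part of Definition~\ref{de:dtrees}, leaves are admissible exactly when $\phi_B$ is valid or unsatisfiable. In particular $\mathrm{OPT}(\emptyset)$ is the cost in the theorem.

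The main step is an invariant for \textsc{OptCost}$(\phi_B,B,\curcost,\LB,\prob)$. Let $r$ be its return value. Then either $r=\prob\cdot\mathrm{OPT}(B)$, or $\curcost+\prob\cdot\mathrm{OPT}(B)\ge\LB$ and $r\ge\prob\cdot\mathrm{OPT}(B)$ (possibly $r=\infty$). Moreover, every cache entry written equals $\mathrm{OPT}(B)$. We prove this by induction on the number of variables occurring in $\phi_B$, which strictly decreases per call as argued for the \PSPACE\ bound. The terminal case is immediate.

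In the inductive step, fix a loop iteration for $x$. If none of lines~\ref{li:prune1}--\ref{li:prune3} fire, we want $\selcost$ to equal the exact cost of $x$ combined with optimal children. Applying the hypothesis to the child calls, a child whose value is not exact satisfies $\curcost+\selcost\ge\LB$ at that point, so line~\ref{li:prune2} or~\ref{li:prune3} fires. This is exactly the role of line~\ref{li:prune3} noted in the text, and it uses that $\LB$ passed down is no larger than at the check. If a prune fires, the true value of $x$ combined with optimal children is at least the partial $\selcost$, hence $\ge\LB-\curcost$, because costs are nonnegative. Since $\LB$ only decreases, via $\curcost+\mincost$, a pruned option is never strictly better than either the current $\mincost$ or the bound inherited from the caller.

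Consequently, if $\mincost$ ends finite and below $\LB_{\mathrm{in}}-\curcost$, it is the minimum in the recurrence and so equals $\prob\cdot\mathrm{OPT}(B)$. Otherwise the second disjunct holds. The delicate point, which we expect to be the main obstacle, is caching. When $\mincost<\infty$ but the recurrence minimiser was pruned against the \emph{inherited} $\LB$, the value stored would not be exact. We would try to show this cannot happen. When $\mincost$ is finite, the last update made $\LB=\curcost+\mincost$, and every pruned option had value $\ge$ the $\LB$ at its time. That $\LB$ is either some earlier $\curcost+\mincost'\ge\curcost+\mincost$, or the inherited bound, which exceeds $\curcost+\mincost$ because $\mincost$ itself survived pruning. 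Hence $\mincost$ is exact, and cache hits at line~\ref{li:cache_check} return $\prob\cdot\mathrm{OPT}(B)$, preserving the invariant.

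At the top level, $\LB=\infty$ and $\curcost=0$, so the second disjunct is impossible whenever $\phi$ admits any diagram, which it always does by observing all occurring variables. Hence the call returns $1.0\cdot\mathrm{OPT}(\emptyset)$, which proves Theorem~\ref{th:correctness_sketch}.
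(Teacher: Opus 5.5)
Your argument has a genuine gap in its foundation, the Bellman recurrence. The step ``observing a variable not occurring in $\phi_B$ only adds positive cost, so such variables never help'' tacitly assumes that such a variable carries no information about the variables still occurring in $\phi_B$. Under the general joint distributions the theorem allows, it can, and then both your claim and the theorem as stated fail.

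Take $\phi=(a\disj b)\conj(z\disj y)\conj(z\disj\neg y)$ with $\cost(a)=\cost(b)=1$ and $\cost(y)=\cost(z)=0.1$. Let $y,z$ be independent and uniform, with $a=y$ and $b=\neg y$ whenever $z=1$ (arbitrary when $z=0$). Consider the diagram that observes $z$, then (only if $z=1$) $y$, then whichever of $a,b$ the value of $y$ predicts true, and the other one if that fails (a zero-probability event). This is a decision diagram in the sense of Definition~\ref{de:dtrees}, with expected cost $0.1+0.5\cdot(0.1+1)=0.65$.

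After $z=1$, however, the residual formula is $a\disj b$, so your recurrence, like Algorithm~\ref{alg:main}, cannot test $y$ there. Its options are:
\begin{itemize}
\item $z$ first: $0.1+0.5\cdot 1.5=0.85$;
\item $y$ first: $0.1+0.6=0.7$, testing $z$ and then the predicted literal;
\item $a$ or $b$ first: at least $1$.
\end{itemize}
So it evaluates to $0.7$. Your invariant therefore proves that \textsc{OptCost} returns the optimum of the restricted recurrence, not the least expected cost over all diagrams.

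The paper's own proof has the same hole, in its claim that the unpruned search ``exhaustively searches over all decision diagrams''. To close it you need an extra hypothesis. One option is mutual independence of the variables, which matches the product priors used in the experiments. More generally, it suffices that at every reachable $B$ the unobserved variables absent from $\phi_B$ are jointly conditionally independent of those occurring in it. Alternatively, restrict optimality to diagrams that only test variables of the current residual formula.

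Granting that, the rest of your proof is correct and takes a different, tighter route than the paper's. The paper argues in three separate steps: exhaustive search without pruning; pruning discards only options no better than the bound when it was set; cached entries are optimal. The third step is asserted rather than derived. Your two-sided invariant (exact, or certified $\curcost+\prob\cdot\mathrm{OPT}(B)\geq\LB$ with $r\geq\prob\cdot\mathrm{OPT}(B)$), proved by induction on the number of variables in $\phi_B$, treats pruning and caching together. It actually shows that a finite $\mincost$ is exact even when some option was pruned against the inherited bound. That is what makes caching safe, and it explains line~\ref{li:prune3}.

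Two small repairs are needed. First, when a child call returns a non-exact (possibly infinite) value, the true cost of $x$ with optimal children need not be at least the partial $\selcost$. In that case the inequality ``$\geq\LB-\curcost$'' must come from the certificate half of that child's invariant instead. Second, you should assume every reached $B$ has positive probability; otherwise $\mathrm{OPT}(B)$ and the normalisation on line~\ref{li:cache_add} are undefined.
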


\begin{proof}
    Consider the algorithm without pruning and caching, i.e. remove lines~\ref{li:prune1}, \ref{li:prune2} and \ref{li:prune3} as well as line~\ref{li:cache_check}.
    Then the algorithm simply exhaustively searches over all decision diagrams where each leaf corresponds to an valuation under which $\phi$ is either valid or unsatisfiable.
    In each stage, the cost of observing the variables occurring in the current valuation (discounted by its probability) is stored in variable $\curcost$.
    The cost of the minimum cost decision diagram for the working formula $\phi$ is collected in the variable $\mincost$.
    In the base case, i.e. when $\phi$ is valid or unsatisfiable, a cost of 0 is incurred.
    In other nodes, the total cost comprises the probability of visiting this node given the current search path, $p$, times the cost of observing the considered variable $x$ (line~\ref{li:cost1}), plus the costs of the subtrees where $x$ is true and false, respectively (lines~\ref{li:cost2} and \ref{li:cost3}).
    In the main loop, the cost of the optimal decision diagram for $\phi$ for each unobserved variable is compared, and the lowest is selected as $\mincost$ (line~\ref{li:update_mincost}); the optimal cost of determining the truth value of the working formula $\phi$ is returned at the end.

    Pruning preserves correctness.
    To see this, consider that $\LB$ is only updated after both true/false branches are evaluated in a depth-first manner (line~\ref{li:update_bound}) and it is only passed down in further recursive calls, not returned.
    Consider a call to \textsc{OptCost} in which $\LB$ is updated and let the working formula at this point be $\phi_b$, and the $\curcost$ and $\mincost$ at this point be $\curcost_b$ and $\mincost_b$, respectively.
    We claim that the return value for this call is the same whether pruning happens or not; this implies the overall claim, since \LB\ is updated at least after the first variable at the root is evaluated.
    To see this, note that $\mincost_b=\LB-\curcost_b$ is the cost of the lowest cost decision diagram found for $\phi_b$ so far (divided by $p$, which is the same for all variables on this search node).
    Assume that sometime after selecting another variable $x$ in the same loop, we prune on line~\ref{li:prune1}, i.e. $\curcost+\selcost \geq \LB$.
    Thus the cost for any decision diagram for $\phi_b$ that would be found in the pruned branch is at least $\curcost+\selcost-\curcost_b$. Since line~\ref{li:prune1} prunes only when $\curcost+\selcost\geq\LB$, this cost is at least $\LB-\curcost_b=\mincost_b$.
    
    Thus, any decision diagram that would be found after the pruning happened would not contribute to lowering $\mincost_b$ (at the recursive call to \textsc{OptCost} where $\LB$ was found) and therefore also not to lowering the final $\mincost$ at the root.
    Similar reasoning holds for pruning at lines~\ref{li:prune2} and \ref{li:prune3}.

    Finally, exact caching does not affect correctness.
    Exact cache entries store only optimal subsolutions (line~\ref{li:cache_add}), so retrieving an exact entry (line~\ref{li:cache_check}) returns the normalized optimal cost multiplied by the probability of the current path.
    Therefore, caching can only avoid repeated work and cannot change the optimal cost returned by the root call. The advanced lower-bound caching version in Appendix~\ref{sec:advanced_cache} uses lower-bound entries only to prune, never as exact costs.
\end{proof}

\section{Empirical Evaluation}
\label{sec:experiments}
We evaluate our approach in four ways.
First, we show runtime and memory performance on hard instances.
Next, we investigate how the structure of the underlying propositional formula affects runtime.
We then examine the effect of relaxing optimality on runtime and solution quality with a beam-search version of our algorithm.
Finally, we evaluate a structured real-world instance.

Our implementation of Algorithm~\ref{alg:main} (Python~3.9, \texttt{python-sat} v1.8~\cite{itk-sat24}, CaDiCaL~1.9.5~\cite{fleury2020cadical}) was used for all experiments. Experiments were conducted on 2.5 GHz Intel Xeon E5-2680 v3 CPUs with a 120-minute time limit and a 16 GB memory limit per instance.

We are not aware of a suitable benchmark set for SBFE and thus
generated a set of random 3-SAT formulas~\cite{MitchellSL92} along with costs and probabilities. Random 3-SAT is parameterized by the number of variables $N$ and clauses $M$.

We drew variable observation costs uniformly at random from the integers $1$ to $10$.
For simplicity, we drew independent probabilities for truth values uniformly at random.

Figure~\ref{fig:scale_time_vs_nodes} presents results with respect to runtime (left) and peak memory usage (right) on datasets with $6$--$15$ variables and a fixed clause-to-variable ratio of $1.5$ which generates hard instances, averaged over 100 instances per size.
We evaluate the proposed heuristics and four baseline variants: suffixes \texttt{\_cache} and \texttt{\_noPruning} denote enabled caching and disabled pruning, respectively, while \texttt{base} is pruning without caching.
Lines are truncated at the first occurrence of a timeout.
As expected, both runtime and memory usage grow exponentially with problem size.
Both pruning and caching yield exponential improvements over the baselines.
Only cached variants successfully scale to 15 variables.
The best performing methods are \texttt{h\_varCount\_cache} and \texttt{h\_totalCost\_cache}, with an average runtime of 152 and 182 seconds, respectively, on instances with 15 variables, compared to 275, 325 and 518 seconds for the three other caching-enabled variants.

In contrast to the significant memory use shown for the cached variants, without caching, memory usage is minimal, and thus in memory-constrained settings, forgoing caching might be sensible.

\begin{figure}[t]
\centering
\includegraphics[width=\columnwidth]{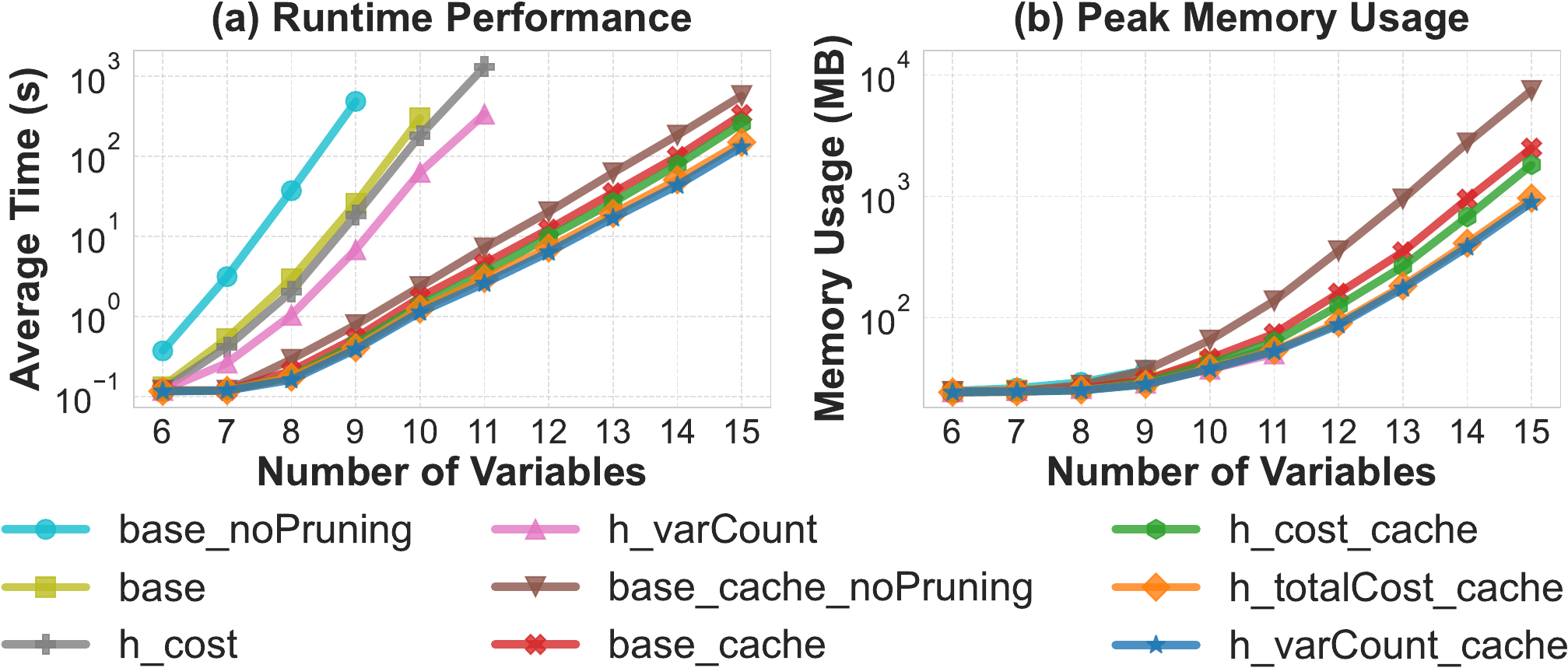}
\caption{Average runtime (left) and peak memory usage (right) of the evaluated configurations as the number of variables increases. Curves end at the first timeout.}
\label{fig:scale_time_vs_nodes}
\end{figure}

\begin{figure}[t]
\centering
\includegraphics[width=0.7\columnwidth]{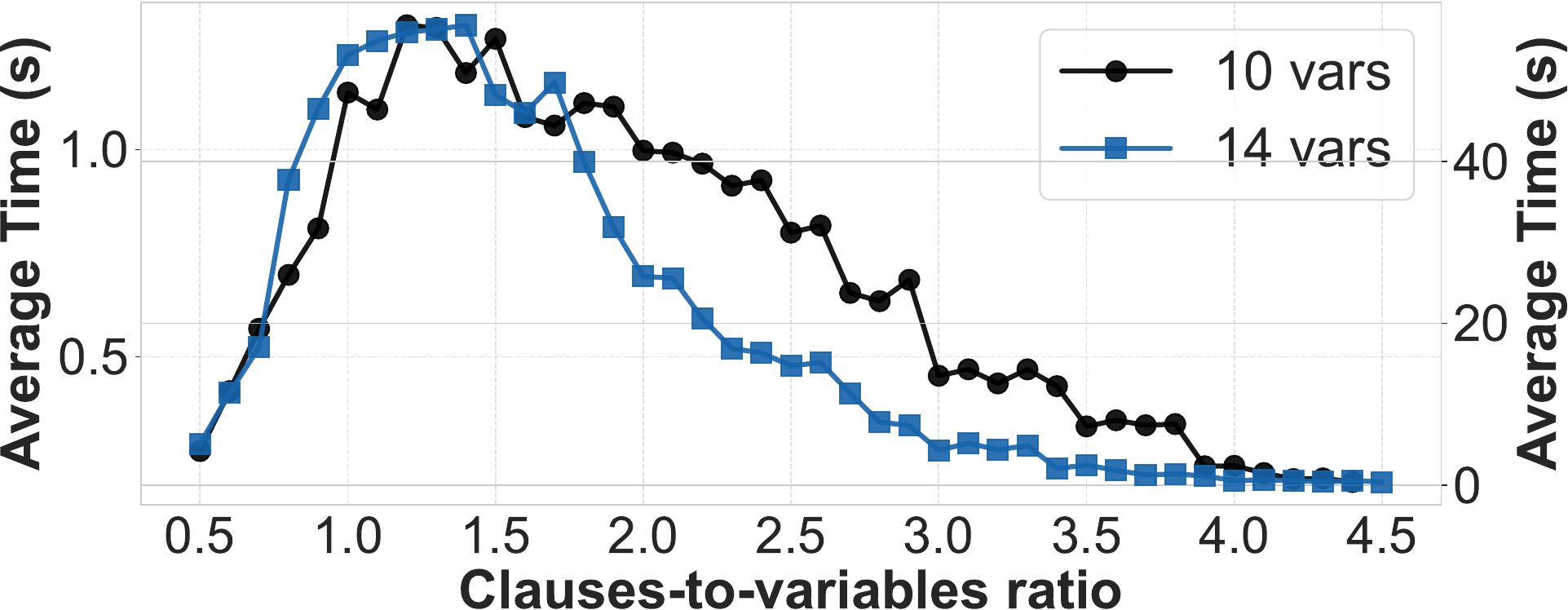}

    \caption{Impact of the clause-to-variable ratio on runtime for $N=10$ (left y-axis) and $N=14$ (right y-axis).}
\label{fig:ratio_time}
\end{figure}

We proceed to the impact of the clause-to-variable ratio ($r$) on problem difficulty.
We show in Figure~\ref{fig:ratio_time} the performance of the algorithm on instances with varying ratios
for two different problem sizes ($N=10$ and $N=14$).
For both instance sizes, the average solving time peaks around a ratio of $1.2$ to $1.5$, well before the SAT phase transition point ($\approx$4.27).
This is likely because in highly constrained instances there are lots of logical consequences and thus smaller trees, while less constrained instances are easily satisfiable, so setting a few variables suffices.

Thus, $r=1.5$ is a near-worst-case (see above) and $r=3$ an easier scenario (below).

\begin{figure}[t]
\centering
\includegraphics[width=\columnwidth]{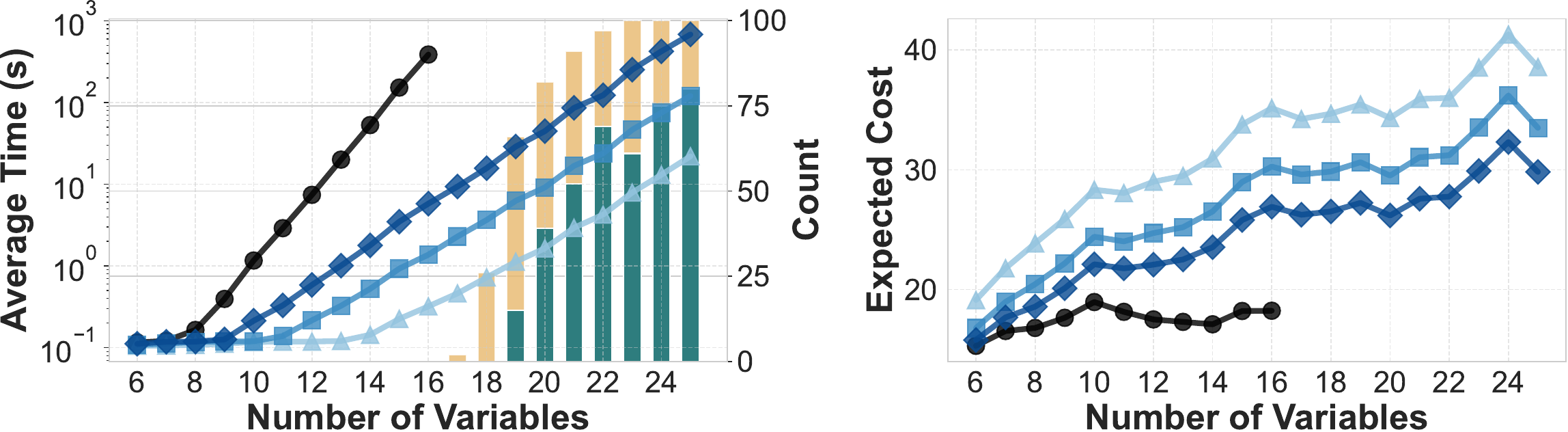}
\includegraphics[width=\columnwidth]{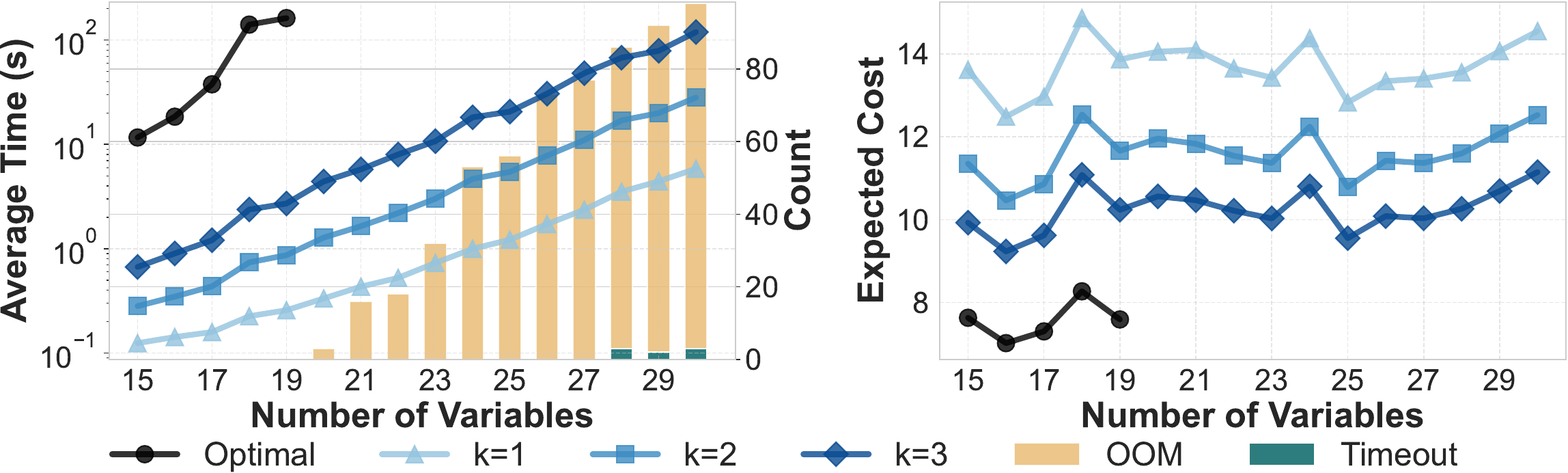}
\caption{Runtime (left) and expected cost (right) for exact versus beam search ($k \in \{1,2,3\}$) on hard ($r=1.5$, top) and relaxed ($r=3.0$, bottom) instances. OOMs and timeouts are shown for the exact algorithm only.}
\label{fig:beam}
\end{figure}

Finally, we compare greedy beam search (branching into only the best $k$ variables at each search node) to our exact algorithm (Figure~\ref{fig:beam}).
For this comparison, the exact baseline is the best variant from Figure~\ref{fig:scale_time_vs_nodes}, i.e. \texttt{h\_varCount\_cache}.
In the harder instances with $r=1.5$ (top), the exact algorithm exceeded resource limits when $n>16$, whereas beam search solved all instances.
At $n=16$, beam search is more than an order of magnitude faster.

In terms of cost, beam with $k=3$ remains within $\approx 30\%$ of the optimal.
On instances with $r=3$ (bottom), the exact algorithm scales further to $n=19$, with runtime and cost behaviour similar to that of beam search.

Interestingly, with higher clause-to-variable ratio, the optimal algorithm mostly runs out of memory rather than time, while the opposite is true with a lower ratio.

\paragraph{Illustrative medical-diagnosis case study}

To illustrate how a learned classifier can be converted into a structured SBFE instance, we transform a medical-diagnosis classification tree into a stochastic Boolean function evaluation problem.
We trained a decision tree~\cite{pedregosa2011scikit} on the \textit{Cleveland Heart Disease} dataset~\cite{heart_disease_45} to capture the diagnostic rules.
We turned the rules into a Boolean formula as implications from the conditions of each path to the class at the leaf.
Continuous features were discretized via threshold encoding, where consistency was enforced by generating implication constraints (e.g., $\neg(f \le t_1) \lor (f \le t_2)$ for thresholds $t_1 < t_2$) to prevent impossible values.
Each threshold predicate inherits the cost of its underlying medical test from Turney~\shortcite{turney1994cost}.
We set each predicate probability to its empirical marginal frequency and evaluate both policies under the resulting independent product model, which is the probability model used by our algorithm.
This process yielded a Boolean formula with 16 variables and 22 clauses.
Under this common model, evaluating the original decision tree costs \$236.4, while our algorithm identified the global optimum of \$137.4 in 819 seconds, a 41.9\% reduction in expected predicate-acquisition cost.

\section{Conclusion}
\label{sec:conclusions}

We have considered a setting in which a Boolean formula represents a condition for a decision, where observing variables incurs different costs and valuations follow a joint probability distribution.
We have presented a branch-and-bound algorithm for finding a decision diagram of minimum expected cost in this setting.
We have introduced pruning, caching, and variable-selection heuristics, and empirically demonstrated their effectiveness.
We have shown that a greedy beam-search variant offers a controllable efficiency--quality trade-off.
Finally, we have analyzed the computational complexity of decision-diagram construction, showing that it is $\sharpP$-hard and contained in $\PSPACE$.

\bibliographystyle{named}

\bibliography{ourbib,abbrvs}

\clearpage
\appendix
\input{appendix-body}

\end{document}

%% file: appendix-body.tex
\section{Pruning in the Algorithm Pseudo-Code}

In the pseudo-code of our algorithm given in the main paper,
we have three code lines like this one.
\begin{equation}\label{codeline}
  \mathbf{if} \mathit{ parentcost} + C_x\geq \mathit{bound}\ \mathbf{then}\ \mathbf{continue}
\end{equation}
The first two code lines obviously prune parts of the search tree,
but the third one obviously does not.
The purpose of the third code line is instead to prevent caching of
values that do not represent the expected cost of the best decision
diagram for a given formula.
The next example illustrates what would happen without this code line.

\begin{example}
Consider a decision diagram construction problem in which
at some stage the current formula is
\[ (Z\impl Y)\conj(\neg Z\impl(X\disj Y)) \]
and a part of the search tree looks as in Figure~\ref{fi:pruningproblem}.

We show how the algorithm without the pruning condition (\ref{codeline}) can cache a cost for the formula $X\disj Y$ that is not the lowest possible cost for a decision diagram for $X\disj Y$.

The core issue is that caching makes it possible to avoid some prunings that would otherwise take place, and in some cases for some formulas, a non-optimal diagram does not get pruned, but the optimal diagram for that same formula will get pruned, and the non-optimal value gets stored in the cache.

Let all the variables be statistically independent and $P(X)=P(Y)=0.5$.

For the problematic behavior to be possible in this example,
the decision diagram for $X\disj Y$ with root $Y$ must have a lower cost
than the diagram for $X\disj Y$ with root $X$.
This will be the case when
\[ c(Y)+0.5c(X) < c(X)+0.5c(Y). \]

For the pruning to not happen for the diagram with root $X$ and for it to happen for the diagram with root $Y$, it must be that
\[ c(X) < \mathit{bound} \ \mathrm{ and }\  c(Y)+0.5c(X)>\mathit{bound}. \]

These hold, for example, when the costs and the effective bound at the relevant spot in the search tree are as follows.
  \[
  \begin{array}{rcl}
    c(X) & = & 1.2 \\
    c(Y) & = & 1 \\
    \mathit{bound} & = & 1.4
  \end{array}
  \]

\begin{figure}
\begin{center}
\begin{tikzpicture}[xscale=0.45,yscale=0.45]

  \node(root) at (10,10) {$(Z\impl Y)\conj(\neg Z\impl(X\disj Y))$};
  \node[diamond,draw,inner sep=1pt] (Zt) at (5,8) {$Z$};
  \node[diamond,draw,inner sep=1pt] (Xt) at (10,8) {$X$};
  \node[diamond,draw,inner sep=1pt] (Yt) at (15,8) {$Y$};

  \draw(root) -- (Zt);
  \draw(root) -- (Xt);
  \draw(root) -- (Yt);

  \node(ZtTf) at (2,6) {$Y$};
  \node[diamond,draw,inner sep=1pt] (ZtT) at (2,4) {$Y$};

  \node(ZtF) at (8,6) {$X\disj Y$};

  \draw(Zt) -- (ZtTf);
  \draw(ZtTf) -- (ZtT);
  \draw[dashed] (Zt) -- (ZtF);

  \node(ZtT1) at (1,2) {$\top$};
  \node(ZtT0) at (3,2) {$\bot$};

  \draw(ZtT) -- (ZtT1);
  \draw[dashed] (ZtT) -- (ZtT0);

  \node[diamond,draw,inner sep=1pt] (ZtFX) at (6,4) {$X$};
  \node[diamond,draw,inner sep=1pt] (ZtFY) at (10,4) {$Y$};

  \draw(ZtF) -- (ZtFX);
  \draw(ZtF) -- (ZtFY);

  \node(ZtFX1) at (5,2) {$\top$};

  \node(ZtFX0f) at (7,2) {$\mathbf{Y}$};

  \draw(ZtFX) -- (ZtFX1);
  \draw[dashed] (ZtFX) -- (ZtFX0f);

  \node(ZtFY1) at (9,2) {$\top$};

  \node (ZtFY0f) at (11,2) {$X$};
  \node[diamond,draw,inner sep=1pt] (ZtFY0) at (11,0) {$X$};

  \draw(ZtFY) -- (ZtFY1);
  \draw[dashed] (ZtFY) -- (ZtFY0f);
  \draw(ZtFY0f) -- (ZtFY0);

  \node(ZtTFY0X1) at (10,-2) {$\top$};
  \node(ZtTFY0X0) at (12,-2) {$\bot$};

  \draw(ZtFY0) -- (ZtTFY0X1);
  \draw[dashed] (ZtFY0) -- (ZtTFY0X0);

\end{tikzpicture}
\end{center}
\caption{Example of problematic pruning}\label{fi:pruningproblem}
\end{figure}

What happens in this example is the following.

\begin{enumerate}
\item At the first (leftmost) $Y$ node, the cost of the diagram for formula $Y$ is cached.
\item At the second formula $Y$ node (marked in {\bf boldface}), the cost of the subtree is fetched from the cache. The bound of the $X\disj Y$ node is violated, but the cached value is used as if there had been no bound violation.
  For $X\disj Y$ the diagram with root $X$ is now the best one so far.
\item The search for the best diagram for $X\disj Y$ proceeds to the sibling with $Y$ as the root. The illustration shows the full diagram below that node, but the second sub-tree with $Y=0$ will not be developed beyond the node with formula $X$ due to a violation of the bound.
\item The issue now is that this sub-tree, with $Y$ in the root, has a lower cost than the sub-tree with $X$ in the root. But this sub-tree was pruned. Hence the sub-optimal value for $X\disj Y$, which was found first, gets cached.
\item Later in the search, if/when a node with $X\disj Y$ is encountered again,
  this sub-optimal cost for $X\disj Y$ will be retrieved from the cache, which can lead to the algorithm delivering a decision diagram that is not optimal.
\end{enumerate}
\end{example}

To prevent sub-optimal values from being cached, we discard any cost computation that may have depended on pruning an optimal sub-solution. We have chosen to use the third code line (\ref{codeline}). Another possibility would be to check the bound when retrieving a value from the cache.

\section{Full Algorithm with Advanced Caching}\label{sec:advanced_cache}
Algorithm~\ref{alg:advanced_cache} presents the complete advanced-caching implementation of our approach. While the simplified version in the main text only caches optimal subsolutions, this version incorporates an advanced mechanism to handle cases where the search is pruned (i.e., when $\mincost = \infty$). Note that the optimality still holds in this version.

\paragraph{Lower Bound Caching} When the search for a specific assignment $B$ is cut off because the cost exceeds the global bound $\LB$, we cannot store an exact cost. However, we have derived valid information: the cost of this subtree is at least the difference between the bound and the parent cost. In line~\ref{adv:cache_lb_store}, we store this lower bound (normalized by probability $\prob$) along with a flag \FlagLB.

\paragraph{Reusing Lower Bounds} When this state $B$ is revisited later with a potentially looser bound, we retrieve the cached lower bound (line~\ref{adv:cache_check}). Even though it is not an exact value, it allows us to perform an immediate consistency check. If the cached lower bound, when scaled by $\prob$ and added to the current $\curcost$, still exceeds the new $\LB$ (line~\ref{adv:cache_lb_check}), we can immediately prune the search (returning $\infty$). This effectively prevents redundant explorations of expensive subtrees even when their exact costs are unknown.

\begin{algorithm*}[t]
    \caption{Optimal Expected-Cost Decision Diagram Search with Advanced Caching}
    \label{alg:advanced_cache}
    \begin{algorithmic}[1]
    \Function{OptCost}{$\phi, B, \curcost, \LB, \prob$}
        \Comment{$\phi$: CNF, $B$: valuation, $\prob$: path probability}

        \If{$B \in \text{Cache}$}
            \State $(\hat{v}, \textit{flag}) \gets \text{Cache}[B]$ \label{adv:cache_check}
            \Comment{Retrieve cached subsolution}
            \If{$\textit{flag} = \FlagExact$} \Return $\hat{v} \cdot \prob$ \EndIf
            \If{$(\textit{flag} = \FlagLB)$ \textbf{and} $(\curcost + \hat{v} \cdot \prob \geq \LB)$} 
                \Return $\infty$ \label{adv:cache_lb_check}
            \EndIf
        \EndIf
        
        \If{$\phi$ is either valid or unsatisfiable} \Return $0$ \Comment{Terminal node of the decision diagram} \EndIf
        
        \State $\mincost \gets \infty$
        
        \For{each variable $x$ occurring in $\phi$, ordered by heuristic} \label{adv:variteration}
            \State $\selcost \gets \prob \cdot \cost(x)$ \Comment{Immediate observation cost} \label{adv:cost1}
            
            \If{$\curcost + \selcost \geq \LB$} \continue \Comment{Prune if observing $x$ is too costly}\EndIf \label{adv:prune1} 
            
                \State $\selcost \gets \selcost + $ \Call{OptCost}{$assign(x{=}\top,\phi), B\cup\{x{=}\top\}, \curcost+\selcost, \LB, \prob\cdot P(x\mid B)$} \label{adv:cost2}
                \If{$\curcost + \selcost \geq \LB$} \textbf{continue} \Comment{Prune if observing $x$ and the first truth value is too costly}\EndIf \label{adv:prune2} 

                \State $\selcost \gets \selcost+ $ \Call{OptCost}{$assign(x{=}\bot,\phi), B\cup\{x{=}\bot\}, \curcost+\selcost, \LB, \prob\cdot P(\neg x\mid B)$} \label{adv:cost3}
                \If{$\curcost + \selcost \geq \LB$} \textbf{continue} \Comment{Prune if observing $x$ and both truth values is too costly}\EndIf \label{adv:prune3} 
            
            \State $\mincost \gets \min(\mincost,\selcost)$ \label{adv:update_mincost}
            \State $\LB\gets \min(\LB, \curcost+\mincost)$ \label{adv:update_bound}
        \EndFor
        
            \If{$\mincost = \infty$}
                \State $\text{Cache}[B] \gets ((\LB - \curcost)/\prob, \FlagLB)$ \Return $\infty$ \Comment{Store non-optimal subsolution as lower bound} \label{adv:cache_lb_store}
            \Else
                \State $\text{Cache}[B] \gets (\mincost/\prob, \FlagExact)$
                \Return $\mincost$ \Comment{Store (normalized) subsolution} \label{adv:cache_add}
            \EndIf
    \EndFunction
    \end{algorithmic}
\end{algorithm*}

\section*{Further Details on Empirical Evaluation}

\section{Dataset Generation Details}

All reported random-instance results average over 100 independently generated 3-SAT formulas. For scalability, $N$ ranges from 6 to 15 at a fixed clause-to-variable ratio of 1.5. For the ratio study, $N\in\{10,14\}$ and $M/N$ ranges from 0.5 to 4.5 in increments of 0.1. Beam search uses the same instances as the scalability study and $k\in\{1,2,3\}$. Observation costs are drawn uniformly from the integers 1 to 10, and truth probabilities are independently sampled from $[0,1]$ and rounded to two decimal digits.

\section{Illustrative Medical-Diagnosis Case Study}

\subsection{Data and Proxy Model}
We use the Cleveland Heart Disease data from the UCI Heart Disease collection~\cite{heart_disease_45}. The target \texttt{num} is binarized as $\texttt{num}>0$, and missing feature values are imputed with the feature median. We train \texttt{DecisionTreeClassifier(max\_depth=4, random\_state=42)} on the resulting data to obtain a deterministic CART proxy. The proxy has 15 split predicates. We add one zero-cost target variable representing the binary output, yielding 16 Boolean variables in total.

\subsection{CNF Construction}
Each CART root-to-leaf path is converted into one clause encoding that the conjunction of its path conditions implies the label predicted at its leaf. The depth-4 proxy produces 16 such leaf-path clauses. To enforce consistency among predicates derived from the same physical feature, we add six threshold-consistency clauses of the form $\neg(f\le t_1)\lor(f\le t_2)$ for $t_1<t_2$. The resulting formula has 22 clauses.

\subsection{Probability and Cost Modeling}
For each split predicate $f\le t$, its probability is its empirical frequency in the cleaned data, clipped to $[0.001,0.999]$. The solver uses the product prior formed from these marginal probabilities. Each predicate inherits the Turney cost of its underlying physical feature~\shortcite{turney1994cost}, while the target variable has zero observation cost. The model charges additively for threshold predicates, including multiple thresholds of the same feature. This is an explicit modeling choice of the Boolean encoding, rather than a direct simulation of a clinical workflow.

\subsection{Like-for-Like Baseline and Scope}
We evaluate the original CART proxy recursively under the same product prior and the same additive predicate-cost model as the optimized decision diagram. Thus, the comparison of \$236.4 for the CART proxy with \$137.4 for the optimized diagram is a cost-optimization result for the encoded SBFE model. It is not a clinical validation or a claim about real diagnostic workflow costs.